\documentclass[10pt,journal]{IEEEtran}

\usepackage{mathtools, amsthm, amsopn, amsmath}
\usepackage{etex}
\usepackage{graphicx}
\usepackage{endnotes}
\usepackage{hyperref}
\usepackage{epsfig,psfrag}
\usepackage{pst-all}
\usepackage{amssymb,amsfonts,upref,cite,epsf,color,bm}
\usepackage{graphicx}
\usepackage{color}
\usepackage{calc}
\usepackage{booktabs}
\usepackage{tikz}
\usepackage{pgfplots}
\newcommand\defeq{:=}
\usepackage{subfig}

\usepackage{algorithm, float}
\usepackage{algpseudocode}
\floatname{algoirthm}{algorithm}
\algnewcommand\algorithmicinput{\textbf{Input:}}
\algnewcommand\INPUT{\item[\Alg.\icinput]}
\algnewcommand\algoirhtmicoutput{\textbf{Output:}}
\algnewcommand\OUTPUT{\item[\algorithmicoutput]}

\DeclareMathOperator*{\argmin}{argmin}

\newcommand\vect[1]{\mathbf #1}
\newcommand{\vu}{\vect{u}}

\newcommand{\gsignal}{\mathbf{w}}
\newcommand{\gdual}{\mathbf{u}}
\newcommand{\hatgsignal}{\widehat{\gsignal}}
\newcommand{\numnodes}{N}
\newcommand{\numedges}{E}
\newcommand{\glabel}{y}
\newcommand{\gindex}[1][i]{^{(#1)}}
\newcommand{\nodeidx}{i}
\newcommand{\gweight}{A}
\newcommand{\sigdimens}{d}
\newcommand{\gfeature}{\mathbf{x}}
\newcommand{\gvariable}{\mathbf{v}}
\newcommand{\gvariablep}{\mathbf{v}'}

\newcommand{\incidence}{\vect{D}}
\newcommand{\samplesize}{M}

\newcommand{\samplingset}{\mathcal{M}}
\newcommand{\edges}{\mathcal{E}}
\newcommand{\nodes}{\mathcal{V}}
\newcommand{\graph}{\mathcal{G}}
\newcommand{\graphsigs}{\mathcal{C}} 

\newcommand{\diag}{{\rm diag}}

\newtheorem{theorem}{Theorem}

\newtheorem{lemma}[theorem]{Lemma}

\definecolor{lavander}{cmyk}{0,0.48,0,0}
\definecolor{violet}{cmyk}{0.79,0.88,0,0}
\definecolor{burntorange}{cmyk}{0,0.52,1,0}

\def\oran{orange!30}

\tikzstyle{vertex}=[draw,circle,burntorange, left color=\oran,
                       text=blue,fill=black!25,minimum size=15pt,inner sep=0pt]
\tikzstyle{sampledvertex}=[draw,circle,black, 
                       text=blue,fill=black!25,minimum size=15pt,inner sep=0pt]                       

\tikzstyle{legendvertex}=[rectangle, right,
                       text=black,minimum size=15pt,inner sep=0pt]

\tikzstyle{legendsample}=[rectangle, right,
                       text=black,minimum size=15pt,inner sep=0pt]  


\title{Classifying Partially Labeled Networked Data via Logistic Network Lasso} 

\author{Nguyen Tran, Henrik Ambos and Alexander Jung

\thanks{Authors are with the Department of Computer Science, Aalto University, Finland; firstname.lastname(at)aalto.fi}}

\begin{document}
	\maketitle
\begin{abstract}
We apply the network Lasso to classify partially labeled data points which are characterized by high-dimensional feature vectors. 
In order to learn an accurate classifier from limited amounts of labeled data, we borrow statistical strength, via an intrinsic network structure, 
across the dataset. The resulting logistic network Lasso amounts to a regularized empirical risk minimization problem using the total variation 
of a classifier as a regularizer. This minimization problem is a non-smooth convex optimization problem which we solve using a primal-dual 
splitting method. This method is appealing for big data applications as it can be implemented as a highly scalable message passing algorithm. 
	
\end{abstract}


\section{Introduction}
	The \emph{least absolute shrinkage and selection operator} (Lasso) has been extended to networked data recently. 
	This extension, coined the ``network Lasso'' (nLasso), allows efficient processing of massive datasets using convex optimization methods \cite{NetworkLasso}. 
	
	Most of the existing work on nLasso-based methods focuses on predicting numeric labels (or target variables) within regression 
	problems \cite{WhenIsNLASSO, LocalizedLasso, NNSPFrontiers,NetworkLasso,Chen2015,SandrMoura2014,JungSLPcomplexit2018}. 
	In contrast, we apply nLasso to binary classification problems which assign binary-valued 
	labels to data points \cite{Bhagat11,LargeGraphsLovasz,BigDataNetworksBook}. 
	
	In order to learn a classifier from partially labeled networked data, we minimize the logistic loss incurred on a training set constituted by 
	few labeled nodes. Moreover, we aim at learning classifiers which conform to the intrinsic network structure of the data. In particular, 
	we require classifiers to be approximately constant over well-connected subsets (clusters) of data points. This cluster assumption lends naturally to regularized 
	empirical risk minimization with the total variation of the classifier as regularization term \cite{VapnikBook}. We solve this non-smooth convex 
	optimization problem by applying the primal-dual method proposed in \cite{pock_chambolle, Condat2013}.
	 
	The proposed classification method extends the toolbox for semi-supervised classification in networked data \cite{SemiSupervisedBook, Zhan2014, Ruusuvuori2012,Kechichian2018,Boykov2004}. 
	In contrast to label propagation (LP), which is based on the squared error loss, we use the logistic loss which is more suitable for classification problems. 
	Another important difference between LP methods and nLasso is the different choice of regularizer. Indeed, LP uses the Laplacian quadratic form 
	while the nLasso uses total variation for regularization. 
	
	Using a (probabilistic) stochastic block model for networked data, a semi-supervised classification method is obtained as an instance of belief propagation 
	method for inference in graphical models \cite{Zhan2014}. In contrast, we assume the data (network-) structure as fixed and known. The proposed method 
	provides a statistically well-founded alternative to graph-cut methods \cite{Ruusuvuori2012,Kechichian2018,Boykov2004}. While our approach is based on a 
	convex optimization (allowing for highly scalable implementation), graph-cuts is based on combinatorial optimization which makes scaling them to large datasets 
	more challenging. Moreover, while graph-cut methods apply only to data which is characterized by network structure and labels, our method allows to exploit additional 
	information provided by feature vectors of data points. 
	
	\textbf{Contribution:} Our main contributions are: (i) We present a novel implementation of logistic network Lasso by applying a primal-dual method. This method can be 
	implemented as highly scalable message passing on the network structure underlying the data. (ii) We prove the convergence of this primal-dual method and (iii) 
	verify its performance on synthetic classification problems in chain and grid-structured data.

	\textbf{Notation:} 
Boldface lowercase (uppercase) letters denote vectors (matrices). We denote $\vect{x}^T$ the transpose of vector $\vect{x}$. The $\ell_2$-norm of a vector $\vect{x}$ is $\|\vect{x} \| = \sqrt{\vect{x}^T \vect{x}}$. The convex conjugate of a function $f$ is defined as $f^*(\vect{y}) = \sup_{\vect{x}} (\vect{y}^T \vect{x} - f(\vect{x}))$.  We also need the sigmoid function $\sigma(z) \defeq 1/(1+\exp(-z))$.
	
\section{Problem Formulation}
\label{sec_problem_formuation} 

	We consider networked data that is represented by an undirected weighted graph (the 
	``empirical graph'') $\graph = (\nodes, \edges, \mathbf{A})$. A particular node $i \in \nodes =  \{1, \ldots, \numnodes\}$ of the graph represents an 
	individual data point (such as a document, or a social network user profile).\footnote{With a slight abuse of notation, we refer by $i \in \nodes$ to a 
	node of the empirical graph as well as the data point which is represented by that node.} Two different data points $i,j \in \nodes$ 
	are connected by an undirected edge $\{i,j\} \in \edges$ if they are considered similar (such as documents authored by the same person or 
	 social network profiles of befriended users).  For ease of notation, we denote the edge set $\edges$ by $\{1, \ldots, \numedges\defeq|\edges|\} $. 
	
	Each edge $e = \{i,j\} \in \edges$ is endowed with a positive weight $\gweight_e = \gweight_{ij} > 0$ which quantifies the amount of similarity 
	between data points $i,j \in \nodes$. The neighborhood of a node $i \in \nodes$ is $\mathcal{N}(i) \defeq \{ j : \{i,j\} \in \edges \}$.
	
	Beside the network structure, datasets convey additional information in the form of features $\gfeature\gindex \in \mathbb{R}^{\sigdimens}$ and 
	labels $\glabel\gindex \in \{-1,1\}$ associated with each data point $i \in \nodes$. In what follows, we assume the features to be normalized such 
	that $\| \gfeature\gindex \| = 1$ for each data points $i \in \nodes$. While features are typically available for each data point $i \in \nodes$, 
	labels are costly to acquire and available only for data points in a small training set $\samplingset =\{i_{1},\ldots,i_{\samplesize}\}$ containing $\samplesize$ labeled data points.
	
	We model the labels $\glabel\gindex$ of the data points $i \in \nodes$ as independent random variables with (unknown) 
	probabilities 
	\begin{equation} 
	\label{equ_def_p_i}
	p\gindex \defeq {\rm Prob} \{ \glabel\gindex \!=\! 1 \} = \frac{1}{1+ \exp(-(\gsignal\gindex)^T \gfeature\gindex)}.
	\end{equation} 
	The probabilities $\{ p\gindex \}_{\nodeidx \in \nodes}$ are parametrized by some (unknown) weight vectors $\gsignal\gindex$. Our goal is to develop a method for learning an 
	accurate estimate $\widehat{\gsignal}\gindex$ of the weight vector $\gsignal\gindex$. Given the estimate $\widehat{\gsignal}\gindex$, we can compute
	an estimate $\hat{p}\gindex$ for the probability $p\gindex$ by replacing ${\gsignal}\gindex$ with $\widehat{\gsignal}\gindex$ in \eqref{equ_def_p_i}.
	
	We interpret the weight vectors as the values of a graph signal $\gsignal: \nodes \rightarrow \mathbb{R}^{\sigdimens}$
	assigning each node $i \in \nodes$ of the empirical graph $\graph$ the vector $\gsignal\gindex\in \mathbb{R}^{\sigdimens}$. 
	The set of all vector-valued graph signals is denoted $\graphsigs\!\defeq\!\{ \gsignal: \nodes \rightarrow \mathbb{R}^{\sigdimens}: i \mapsto \gsignal \gindex \}$. 
	
	Each graph signal $\widehat{\gsignal} \in \graphsigs$ defines a classifier which maps a node with features $\gfeature \gindex$ to the predicted label
	\begin{equation} 
	\widehat{\glabel}\gindex = \begin{cases} 1 &\text{if }  \big(\widehat{\gsignal}\gindex\big)^T \gfeature\gindex > 0 \\ 
	-1 & {\rm otherwise}.
	\end{cases} 
	\label{equ_label_assign}
	\end{equation}
	
	Given partially labeled networked data, we aim at leaning a classifier $\widehat{\gsignal} \in \graphsigs$ which agrees with the labels $\glabel\gindex$ 
	of labeled data points in the training set $\samplingset$. In particular, we aim at learning a classifier having a small training error 
	\begin{equation} 
	\label{equ_def_emp_risk}
	 \widehat{E}(\widehat{\gsignal})\!\defeq\!({1}/{\samplesize}) \sum_{i \in \samplingset} \!\ell ((\widehat{\gsignal}\gindex)^T \widetilde{\gfeature}\gindex) 
	\end{equation} 
	with $\widetilde{\gfeature}\gindex \defeq \glabel\gindex \gfeature\gindex$ and the logistic loss  
		\vspace*{-1mm}
	\begin{equation}
	\label{equ_def_log_loss}
	\ell (z) \defeq {\rm log} (1 + \exp(-z) ) = -{\rm log}(\sigma(z)).
	\vspace*{-3mm}
	\end{equation}
	
\section{Logistic Network Lasso}
\label{sec_lNLasso}
	
	The criterion \eqref{equ_def_emp_risk} by itself is not enough for guiding the learning of a classifier $\gsignal$ since \eqref{equ_def_emp_risk} completely
	ignores the weights $\widehat{\gsignal}\gindex$ at unlabeled nodes $i \in \nodes \setminus \samplingset$. Therefore, we need to impose some additional structure on the classifier $\widehat{\gsignal}$. 
	In particular, any reasonable classifier $\widehat{\gsignal}$ should conform with the \emph{cluster structure} of the empirical graph $\graph$ \cite{NewmannBook}. 
	
	We measure the extend of a classifier $\widehat{\gsignal} \in \graphsigs$ conforming with the cluster structure of $\graph$ by the total variation (TV)
	\vspace*{0mm}
	\begin{align}
	\label{equ_def_TV_norm}
		\| \gsignal \|_{\rm TV} & \defeq \sum_{\{i,j\}\in \edges} \gweight_{ij} \| \gsignal\gindex[j] - \gsignal\gindex \|. 
	\end{align}
	A classifier $\widehat{\gsignal}\!\in\!\graphsigs$ has small TV if the weights $\widehat{\gsignal}\gindex$ are approximately constant 
	over well connected subsets (clusters) of nodes. 
	
	We are led quite naturally to learning a classifier $\widehat{\gsignal}$ 
	via the  \emph{regularized empirical risk minimization} (ERM)
	\begin{align} \label{optProb}
		\hatgsignal & \in \argmin_{\gsignal \in \graphsigs} \widehat{E}(\gsignal)  + \lambda \| \gsignal \|_{\rm TV}. 
	\end{align}
	We refer to \eqref{optProb} as the logistic nLasso (lnLasso) problem. 
	The parameter $\lambda$ in \eqref{optProb} allows to trade-off small TV  $\| \hatgsignal \|_{\rm TV}$ 
	against small error $\widehat{E}(\hatgsignal)$ (cf. \eqref{equ_def_emp_risk}). 
        The choice of $\lambda$ can be guided by cross validation \cite{hastie01statisticallearning}. 	
	
	Note that lnLasso \eqref{optProb} does not enforce directly the labels $\glabel\gindex$ to be clustered. Instead, it requires the classifier $\widehat{\gsignal}$, which parametrizes the probability distributed of 
	the labels $\glabel\gindex$ (see \eqref{equ_def_p_i}), to be clustered. 
	
	It will be convenient to reformulate \eqref{optProb} using vector notation. We represent a graph signal $\gsignal \in \graphsigs$ as the vector
	\begin{align}
	\label{equ_def_vector_signal}
	\gsignal = ((\gsignal\gindex[1])^T, \ldots ,(\gsignal\gindex[\numnodes])^T)^T \in \mathbb{R}^{\sigdimens\numnodes}.
	\end{align}
	Define a partitioned matrix $\incidence \in \mathbb{R}^{(\sigdimens\numedges) \times (\sigdimens\numnodes)}$  block-wise as \begin{align}
	\incidence_{e,i} = \begin{cases}
	\gweight_{ij} \mathbf{I}_{\sigdimens} & e=\{i,j\}, i<j\\
	-\gweight_{ij} \mathbf{I}_{\sigdimens}& e=\{i,j\}, i>j\\
	\mathbf{0} & {\rm otherwise},
	\end{cases}
	\label{equ_def_incident_matrix}
	\end{align}
	where $\mathbf{I}_{\sigdimens} \in \mathbb{R}^{\sigdimens \times \sigdimens}$ is the identity matrix.
	The term $\gweight_{ij} (\gsignal\gindex - \gsignal\gindex[j])$ in \eqref{equ_def_TV_norm} is the $e$-th block of $\incidence \gsignal$. Using \eqref{equ_def_vector_signal} and \eqref{equ_def_incident_matrix}, we can reformulate the lnLasso \eqref{optProb} as
	\begin{align}\label{LNLprob}
		\hatgsignal \in \argmin_{\gsignal \in \mathbb{R}^{\sigdimens\numnodes}}   h(\gsignal) + g(\incidence \gsignal),
	\end{align}
	with 
	\begin{align}
	\label{equ_def_opt_func}
	h(\gsignal) = \widehat{E}(\gsignal) \text{ and }  g(\gdual) \defeq \lambda \sum_{e=1}^{\numedges}\|\gdual^{(e)}\|
	\end{align}
	 with stacked vector $\gdual = (\gdual^{(1)},\ldots,\gdual^{(\numedges})) \in \mathbb{R}^{\sigdimens\numedges}$.

\section{primal-dual method}
\label{sec_lNLasso_ADMM}
The  lnLasso \eqref{LNLprob} is a convex optimization problem with a non-smooth objective function which rules out the use of gradient descent methods \cite{JungFixedPoint}. However, the objective function is highly structured since it is the sum of a smooth convex function $h(\gsignal)$ and a non-smooth convex function $g(\incidence \gsignal)$,
 which can be optimized efficiently when considered separately. This suggests to use a proximal splitting method \cite{Combettes2009, Connor2014, pock_chambolle} for solving \eqref{LNLprob}. One particular such method is the preconditioned primal-dual method \cite{PrecPockChambolle2011} which is based on reformulating the problem \eqref{LNLprob} as a saddle-point problem 
\begin{align}
\label{equ_pd_prob}
\min_{\gsignal \in \mathbb{R}^{\sigdimens \numnodes}} \max_{\gdual \in \mathbb{R}^{\sigdimens\numedges}} \gdual^T\incidence \gsignal  + h(\gsignal) - g^*(\gdual),
\end{align}
with the convex conjugate  $g^*$ of $g$ \cite{pock_chambolle}.

Solutions $(\hatgsignal, \widehat{\gdual})$ of \eqref{equ_pd_prob} are characterized by \cite[Thm 31.3]{RockafellarBook} 
\begin{align}
-\incidence^T \widehat{\gdual}  \in \partial h(\hatgsignal)\nonumber\\
\incidence \hatgsignal \in \partial g^*(\widehat{\gdual}).
\label{equ_pd_cond_1}
\end{align}
This condition is, in turn, equivalent to
\begin{align} 
\hatgsignal - \mathbf{T} \incidence^T \widehat{\gdual}  \in (\mathbf{I}_{\sigdimens\numnodes} +  \mathbf{T} \partial h) (\hatgsignal),\nonumber\\
\widehat{\gdual} + \boldsymbol{\Sigma} \incidence \hatgsignal \in (\mathbf{I}_{\sigdimens\numedges} + \boldsymbol{\Sigma} \partial g^*)(\widehat{\gdual}),
\label{equ_pd_cond_1_2}
\end{align}
with positive definite matrices $ \boldsymbol{\Sigma} \!\in\! \mathbb{R}^{\sigdimens \numedges \times \sigdimens\numedges}, \mathbf{T} \!\in\! \mathbb{R}^{\sigdimens\numnodes \times \sigdimens\numnodes}$. The matrices $\boldsymbol{\Sigma}, \mathbf{T}$ are design parameters whose choice will be detailed below. 
The condition \eqref{equ_pd_cond_1_2} lends naturally to the following coupled fixed point iterations \cite{PrecPockChambolle2011}
\begin{align}
\gsignal_{k+1} \!&=\! (\mathbf{I} \!+\! \mathbf{T} \partial h)^{-1} (\gsignal_{k} \!-\!\mathbf{T} \incidence^T \gdual_{k})  \label{equ_pd_upd_x} \\
\gdual_{k+1} \!&=\! (\mathbf{I} \!+\! \boldsymbol{\Sigma} \partial g^*)^{-1} (\gdual_{k} \!+\! \boldsymbol{\Sigma} \incidence (2\gsignal_{k+1} \!-\! \gsignal_{k})).
\label{equ_pd_upd_y}
\end{align}

The update \eqref{equ_pd_upd_y} involves the resolvent operator 
\begin{align}
\label{equ_def_prox}
\hspace*{-2mm}(\mathbf{I} \!+\! \boldsymbol{\Sigma} \partial g^*)^{-1} (\gvariable) \!=\! \argmin_{\gvariablep \in \mathbb{R}^{\sigdimens \numedges} } g^*(\gvariablep) \!+\! (1/2)\| \gvariablep \!-\! \gvariable\|^2_{\boldsymbol{\Sigma}^{-1}},
\end{align}
where  $\|\gvariable\|_{\boldsymbol{\Sigma}} \!\defeq\! \sqrt{\gvariable^T \Sigma \gvariable}$. 
The convex conjugate $g^*$ of $g$ (see \eqref{equ_def_opt_func}) can be decomposed as $g^*(\gvariable) = \sum_{e=1}^{\numedges} g_2^*(\gvariable^{(e)})$ 
with the convex conjugate $g_2^*$ of the scaled $\ell_2$-norm $\lambda \|.\|$. Moreover, since $\boldsymbol{\Sigma}$ is a block diagonal matrix, the $e$-th block 
of the resolvent operator $(\mathbf{I}_{\sigdimens\numedges} + \boldsymbol{\Sigma} \partial g^*)^{-1} (\gvariable)$ can be obtained by the Moreau decomposition as \cite[Sec. 6.5]{ProximalMethods}
\begin{align}
((\mathbf{I}_{\sigdimens\numedges} + \boldsymbol{\Sigma} \partial& g^*)^{-1} (\gvariable))^{(e)} \nonumber\\
&\stackrel{\eqref{equ_def_prox}}{=} \argmin_{\gvariablep \in \mathbb{R}^{\sigdimens } } g_2^*(\gvariablep) \!+\! (1/(2\sigma^{(e)})) \| \gvariablep \!-\! \gvariable^{(e)}\|^2 \nonumber\\
&= \gvariable^{(e)} \!-\! \sigma^{(e)} (\mathbf{I}_{\sigdimens} \!+\! (\lambda/ \sigma^{(e)})\partial \|.\|)^{-1} (\gvariable^{(e)}/\sigma^{(e)}) \nonumber\\
&= \begin{cases}
\lambda \gvariable^{(e)}/ \|\gvariable^{(e)}\| & \text{if } \|\gvariable^{(e)}\|> \lambda\\
\gvariable^{(e)} & {\rm otherwise},
\end{cases} \nonumber
\end{align}
where $(a)_{+} \!=\! \max\{a, 0\}$ for $a \in \mathbb{R}$. 

The update \eqref{equ_pd_upd_x} involves the resolvent operator $(\mathbf{I} + \mathbf{T} \partial h)^{-1}$ of $h$ 
(see \eqref{equ_def_emp_risk} and \eqref{equ_def_opt_func}), which does not have a closed-form solution. 
Choosing $\mathbf{T}\!=\! {\rm diag} \{\tau^{(\nodeidx)}\mathbf{I}_{\sigdimens}\}_{\nodeidx=1}^{\numnodes}$, we can 
solve \eqref{equ_pd_upd_x} approximately by a simple iterative method \cite[Sec. 8.2]{DistrOptStatistLearningADMM}. 
Setting$\overline{\gsignal}\!\defeq\!\gsignal_{k}\!-\!\mathbf{T} \incidence^T \gdual_{k}$, 
the update \eqref{equ_pd_upd_x} becomes
\begin{align}
\hspace*{-2mm}\gsignal\gindex_{k+1}\!\defeq\!\argmin_{\widetilde{\gsignal} \in \mathbb{R}^{\sigdimens}} 2 \ell (\widetilde{\gsignal}^T \widetilde{\gfeature}\gindex)\!+\!(\samplesize/\tau^{(i)})\! \|\widetilde{\gsignal} \!-\! \overline{\gsignal}\gindex\|^2.
\label{equ_pd_upd_x_exact}
\end{align}

If the matrices $\boldsymbol{\Sigma}$ and $\mathbf{T}$ satisfy
\begin{align}
\|\boldsymbol{\Sigma}^{1/2} \incidence  \mathbf{T}^{1/2}\|^2 <1,
\label{equ_pre_cond}
\end{align}
the sequences obtained from iterating \eqref{equ_pd_upd_x} and \eqref{equ_pd_upd_y} converge to a saddle point of 
the problem \eqref{equ_pd_prob} \cite[Thm. 1]{PrecPockChambolle2011}. The condition \eqref{equ_pre_cond} is satisfied 
for the choice $\boldsymbol{\Sigma} = \{ (1/(2\gweight_{e})) \mathbf{I}_{\sigdimens}\}_{e \in \edges}$ and 
$\{(\tau/d^{(i)})  \mathbf{I}_{\sigdimens}\}_{i\in \nodes}$, with node degree $d^{(i)} = \sum_{j \neq i } \gweight_{ij} $ 
and some $\tau < 1$ \cite[Lem. 2]{PrecPockChambolle2011}. 

Solving  \eqref{equ_pd_upd_x_exact} is equivalent to the zero-gradient condition
\begin{align}
{-\widetilde{\gfeature} \gindex \sigma(-(\gsignal\gindex)^T \widetilde{\gfeature}\gindex)} + (\samplesize/\tau^{(i)}) ({\gsignal\gindex} \!-\! \overline{\gsignal}\gindex) =0.
\label{equ_pd_upd_x_grad}
\end{align}
The solutions of \eqref{equ_pd_upd_x_grad} are fixed-points of the map
\begin{align}
\label{equ_def_phi}
{\bf \Phi} \gindex(\vu) = \overline{\gsignal}\gindex + (\tau^{(i)}/\samplesize){\widetilde{\gfeature} \gindex\sigma(-\vu^T \widetilde{\gfeature} \gindex)}.
\end{align}

\begin{lemma}
The mapping ${\bf \Phi} \gindex$ \eqref{equ_def_phi} is Lipschitz with constant $\beta_i = \tau^{(i)} \|\gfeature\gindex\|^2/\samplesize$.
\end{lemma}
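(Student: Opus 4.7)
The plan is a direct two-line calculation based on the fact that the only $\vu$-dependent part of $\Phi^{(i)}$ is the scalar sigmoid term. First I would compute
\begin{align*}
\Phi^{(i)}(\vu_1) - \Phi^{(i)}(\vu_2) = (\tau^{(i)}/\samplesize)\, \widetilde{\gfeature}^{(i)} \bigl[\sigma(-\vu_1^T \widetilde{\gfeature}^{(i)}) - \sigma(-\vu_2^T \widetilde{\gfeature}^{(i)})\bigr],
\end{align*}
since the $\overline{\gsignal}^{(i)}$ contribution cancels. Taking Euclidean norms on both sides pulls out the scalar factor $\|\widetilde{\gfeature}^{(i)}\|$ and leaves a scalar sigmoid increment to bound.

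Next I would invoke the standard fact that the sigmoid $\sigma$ is $1$-Lipschitz on $\mathbb{R}$, which follows immediately from $\sigma'(z) = \sigma(z)(1-\sigma(z)) \in (0,1]$ (in fact $\le 1/4$, but the looser bound is enough for the stated constant). This gives
\begin{align*}
\bigl|\sigma(-\vu_1^T \widetilde{\gfeature}^{(i)}) - \sigma(-\vu_2^T \widetilde{\gfeature}^{(i)})\bigr| \le \bigl|(\vu_1 - \vu_2)^T \widetilde{\gfeature}^{(i)}\bigr| \le \|\widetilde{\gfeature}^{(i)}\| \, \|\vu_1 - \vu_2\|,
\end{align*}
where the second inequality is Cauchy--Schwarz.

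Combining the two estimates yields
\begin{align*}
\|\Phi^{(i)}(\vu_1) - \Phi^{(i)}(\vu_2)\| \le (\tau^{(i)}/\samplesize)\, \|\widetilde{\gfeature}^{(i)}\|^2 \, \|\vu_1 - \vu_2\|,
\end{align*}
and I would conclude by noting that $\widetilde{\gfeature}^{(i)} = \glabel^{(i)} \gfeature^{(i)}$ with $\glabel^{(i)} \in \{-1,1\}$ implies $\|\widetilde{\gfeature}^{(i)}\|^2 = \|\gfeature^{(i)}\|^2$, which gives the claimed constant $\beta_i = \tau^{(i)} \|\gfeature^{(i)}\|^2/\samplesize$. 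There is no real obstacle here; the only step worth being careful about is using the plain $1$-Lipschitz bound on $\sigma$ rather than the tighter $1/4$-bound, since the latter would give a smaller constant than the one stated.
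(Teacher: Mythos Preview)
Your proposal is correct and follows essentially the same approach as the paper: both arguments reduce to the $1$-Lipschitz property of the sigmoid combined with Cauchy--Schwarz to pick up the factor $\|\widetilde{\gfeature}^{(i)}\|$, then multiply by the coefficient $(\tau^{(i)}/\samplesize)\|\widetilde{\gfeature}^{(i)}\|$ coming from the definition of $\Phi^{(i)}$. Your write-up is in fact slightly more explicit than the paper's in spelling out the cancellation of $\overline{\gsignal}^{(i)}$ and the identity $\|\widetilde{\gfeature}^{(i)}\| = \|\gfeature^{(i)}\|$, but the underlying argument is identical.
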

\begin{proof}
For any $a, b \in \mathbb{R}$, 
\vspace*{-2mm}
\begin{align}
\big|1/({1+\exp(a)}) \!-\! {1}/({1\!+\!\exp(b)})\big| \leq |a\!-\!b|\nonumber
\end{align}
which implies
\begin{align}
\big|\sigma(-\vu^T \widetilde{\gfeature} \gindex) &\!-\! \sigma(-\mathbf{v}^T \widetilde{\gfeature} \gindex) \big| \leq \|\gfeature\gindex\| \| \mathbf{u}- \mathbf{v}\|, \nonumber
\end{align}
and, in turn,
\begin{align}
\|{\bf \Phi} \gindex(\vu) - {\bf \Phi} \gindex(\mathbf{v})\| &\stackrel{}{\leq} (\tau^{(i)} \| \widetilde{\gfeature} \gindex\|/\samplesize)\|\gfeature\gindex\| \| \mathbf{u}- \mathbf{v}\| \nonumber\\[0mm]
& = \beta_i \| \mathbf{u}- \mathbf{v}\|. \nonumber \\[-11mm]
\nonumber
\vspace*{-11mm}
\end{align}
\end{proof}
We approximate the exact update \eqref{equ_pd_upd_x_exact}  with 
\begin{align}
\widehat{\gsignal}_{k+1}\gindex = \underbrace{{\bf \Phi} \gindex \circ \ldots  \circ{\bf \Phi} \gindex }_{\lceil2\log(k)/\log(1/\beta_i)\rceil}(\overline{\gsignal}\gindex).
\label{equ_pd_upd_x_inexact}
\end{align}
According to \cite[Thm. 1.48]{BausckeCombette}, for $\tau^{(\nodeidx)}\!<\!\samplesize/\|\gfeature\gindex\|^2$ the error incurred by 
replacing \eqref{equ_pd_upd_x_exact} with \eqref{equ_pd_upd_x_inexact} satisfies 
\begin{align}
e_k = \|\widehat{\gsignal}_{k+1}\gindex - {\gsignal}_{k+1}\gindex \| \leq 1/k^2.
\label{equ_pd_upd_x_err}
\end{align}

Given the error bound \eqref{equ_pd_upd_x_err}, as can be verified using \cite[Thm. 3.2]{Condat2013},  the sequences 
obtained by \eqref{equ_pd_upd_x} and \eqref{equ_pd_upd_y} when replacing the exact update \eqref{equ_pd_upd_x_exact} 
with \eqref{equ_pd_upd_x_inexact} converge to a saddle-point of \eqref{equ_pd_prob} and, in turn, a solution of lnLasso \eqref{LNLprob}.

\begin{algorithm}[]
\caption{lnLasso via primal-dual method}\label{alg:ADMM}
\begin{algorithmic}[1]
\renewcommand{\algorithmicrequire}{\textbf{Input:}}
\renewcommand{\algorithmicensure}{\textbf{Output:}}
\Require   $\graph = (\nodes, \edges, \mathbf{A})$, $\{\gfeature\gindex\}_{\nodeidx \in \nodes}$, $\samplingset$, 
$\{ \glabel\gindex \}_{i \in \samplingset}$, $\lambda$, $\incidence$, $\boldsymbol{\Sigma} = \diag\{ \sigma^{(e)}=1/(2\gweight_{e})\mathbf{I}_{\sigdimens}\}_{e=1}^{\numedges}$, 
$\mathbf{T} =  \diag\{\tau^{(\nodeidx)} = 0.9/d^{(\nodeidx)}\mathbf{I}_{\sigdimens}\}_{i\in \nodes}$, $\beta_{\nodeidx} = \tau^{(i)} /|\samplingset|$
\Statex\hspace{-6mm}{\bf Initialize:} $k\!\defeq\!0$, $\widehat{\gsignal}_0\!\defeq\!0$, $\widehat{\gdual}_0\!\defeq\!0$
		\Repeat
			\State $\widehat{\gsignal}_{k+1} \defeq \widehat{\gsignal}_{k} - \mathbf{T} \incidence^T \widehat{\gdual}_{k}$
			\For{ each labeled node $\nodeidx \in \samplingset$}
				\State 
				\vspace{-4mm}\begin{align}
				\widehat{\gsignal}_{k+1}\gindex \defeq \underbrace{{\bf \Phi} \gindex \circ \ldots  \circ{\bf \Phi} \gindex}_{\lceil2\log(k)/\log(1/\beta_i)\rceil}(\widehat{\gsignal}_{k+1}\gindex) \nonumber
				\end{align}
			\EndFor	
			\State  $\overline{\gdual} \defeq {\gdual}_k + \boldsymbol{\Sigma} \incidence (2\widehat{\gsignal}_{k+1}-\widehat{\gsignal}_{k})$ 

			\State  $\widehat{\gdual}_{k+1}^{(e)} = \overline{\gdual}^{(e)} - \bigg(1- \frac{\lambda}{\|\overline{\gdual}^{(e)}\|}\bigg)_{+} \overline{\gdual}^{(e)}$ for $e \in \edges$

		\State  $k\!\defeq\!k\!+\!1$
		\Until stopping criterion is satisfied 
		\Ensure $(\hatgsignal_{k},\widehat{\gdual}_{k})$.
	\end{algorithmic}
\end{algorithm}

\section{Numerical Experiments}
\label{sec_num}

We assess the performance of lnLasso Alg.\ \ref{alg:ADMM} on datasets whose empirical graph is either a chain graph or a grid graph. 

\subsection{Chain}

For this experiment, we generate a dataset whose empirical graph is a chain consisting of $N\!=\!400$ nodes which represent 
individual data points. The chain graph is partitioned into 8 clusters, $\mathcal{C}_r = \{r\cdot50 +1, \ldots, r\cdot50 +50\}$, for $r = 0, \ldots 7$. 
The edge weights $\gweight_{ij}$ are set to $100$ if nodes $i$ and $j$ belong to the same cluster and $1$ otherwise. 

All nodes $\nodeidx \in \mathcal{C}_{r}$ in a particular cluster $\mathcal{C}_r$ share the same weight vector $\mathbf{w}^{(r)} \sim \mathcal{N}(0,\mathbf{I})$ 
which is generated from a standard normal distribution.  The feature vectors $\gfeature\gindex \in \mathbb{R}^{3}$ are 
generated i.i.d. using a uniform distribution over $[0,1]^3$. The true node labels $\glabel\gindex$ are drawn from the 
distribution \eqref{equ_def_p_i}. The training set is obtained by independently selecting each node with probability (labeling rate) $p$. 

We apply  Alg. \ref{alg:ADMM} to obtain a classifier $\widehat{\gsignal}$ which allows to classify data points as  
$\widehat{\glabel}\gindex ={\rm sign} ((\widehat{\gsignal}\gindex)^T \gfeature\gindex) $. In order to assess the performance of Alg. \ref{alg:ADMM} 
we compute the accuracy within the unlabeled nodes, i.e., the ratio of the number of correct labels achieved by Alg. \ref{alg:ADMM} for unlabeled nodes to the number of unlabeled nodes,
\vspace{-2mm}
\begin{align}
ACC \defeq \big(1/({\numnodes - \samplesize})\big) {|\{i : \glabel\gindex = \widehat{\glabel}{}\gindex, i \notin \samplingset\}|}. 
\vspace{-3mm}
\label{equ_exp_accu}
\end{align}
We compute the accuracy of Alg. \ref{alg:ADMM}  for  different choices of $p\in \{0.1, \ldots, 0.9\}$ and $\lambda\in\{10^{-5}, \ldots, 10^{-1}\}$. For a pair of $\{\lambda, p\}$, we repeat the experiment $100$ times and compute the average accuracy.

The accuracies obtained for varying labeling rates $p$ and lnLasso parameter $\lambda$ is plotted in Fig. \ref{fig:accuracy_chain_8}. 
As indicated in Fig. \ref{fig:accuracy_chain_8}, the accuracy increases with labeling rate $p$ which confirms the intuition that increasing 
the amount of labeled data (training set size) supports the learning quality. 

The accuracies in Fig. \ref{fig:accuracy_chain_8} are low since the classifier assigns the labels to nodes based on \eqref{equ_label_assign} while the true labels are drawn from the probabilistic model \eqref{equ_def_p_i}. Indeed, we  also plot in Fig. \ref{fig:accuracy_chain_8} the optimal accuracy, determined by the average of the probability to assign nodes $i$ to their true label when knowing $p\gindex$ using \eqref{equ_def_p_i}, as a horizontal dashed line. Fig. \ref{fig:accuracy_chain_8} shows that accuracies increase with labeling rate $p$ and tends toward the optimal accuracy.
\vspace{-2mm}
\begin{figure}[htbp]
	\includegraphics[width=1\columnwidth]{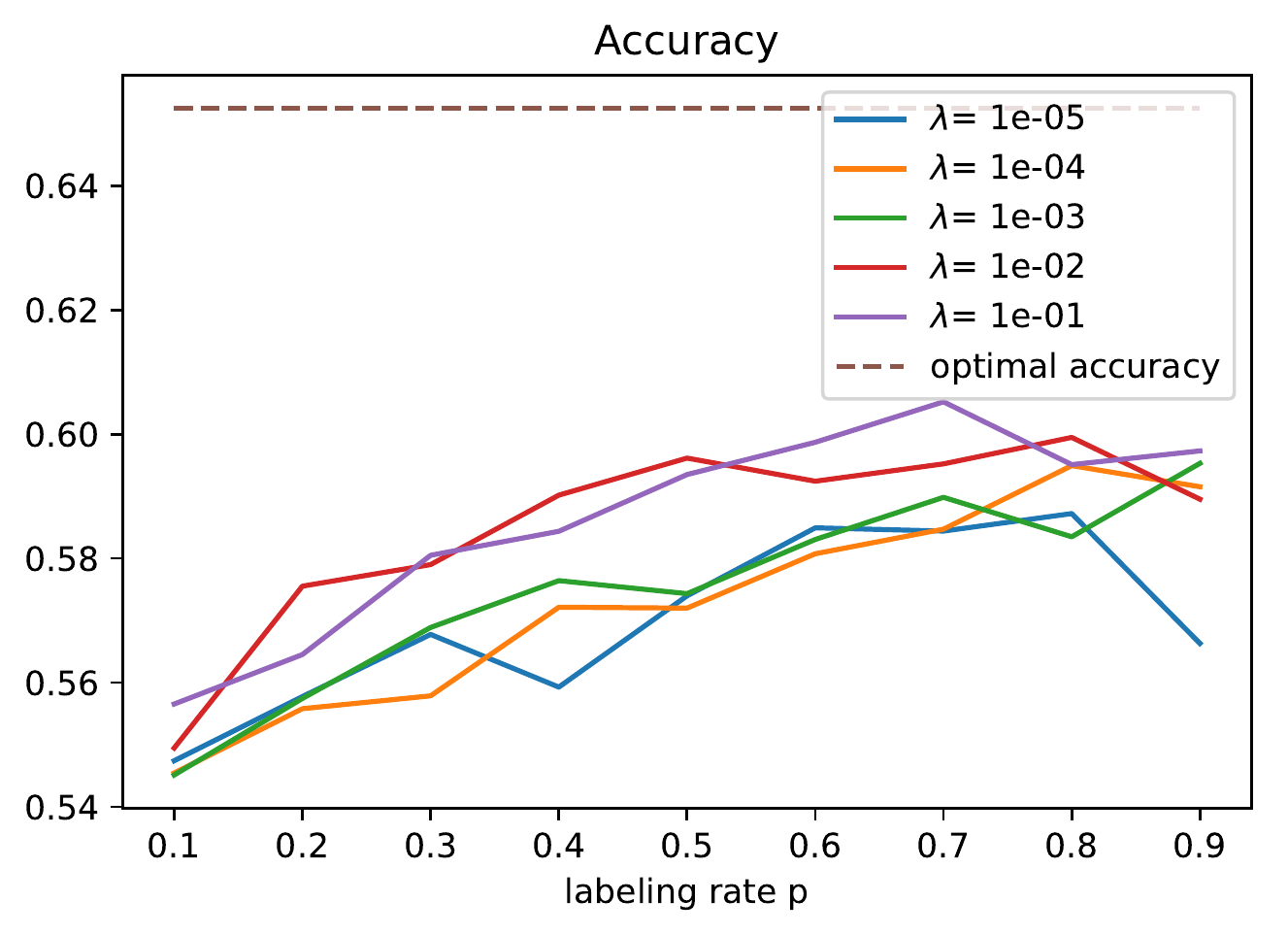}
	\vspace*{-3mm}
	\caption{The classification accuracy for chain-structured data.}
	\label{fig:accuracy_chain_8}
	\vspace*{-3mm}
\end{figure}
\begin{figure}[htbp]
	\includegraphics[width=1\columnwidth]{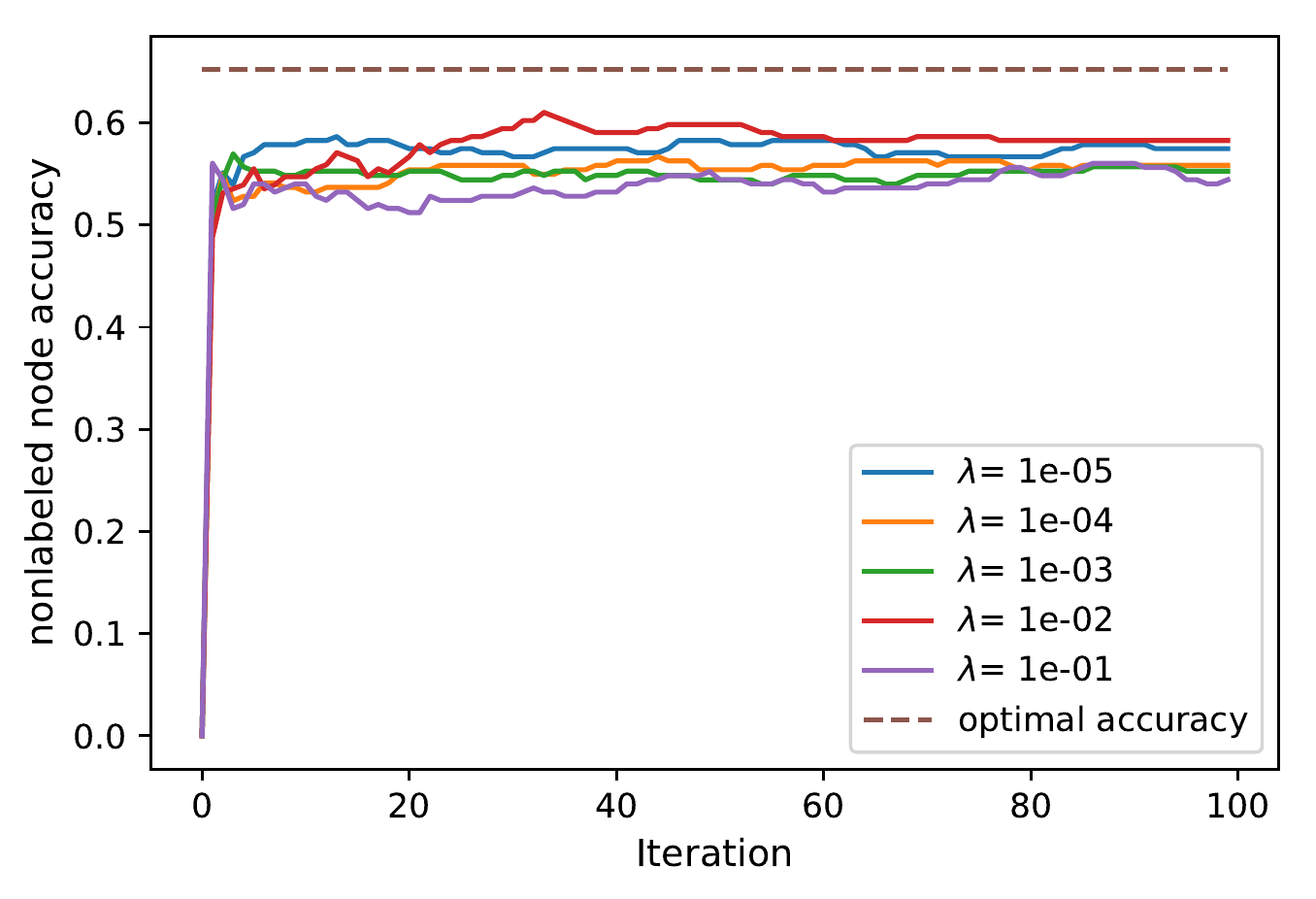}
	\vspace*{-3mm}
	\caption{The convergence rate of Alg. \ref{alg:ADMM} for chain structured data and labeling rate $p=0.4$.}
	\label{fig:convergence_chain8}
	\vspace*{-3mm}
\end{figure}

In Fig. \ref{fig:convergence_chain8}, 
we plot the accuracy (cf. \eqref{equ_exp_accu}) as a function of the number of iterations used in Alg. \ref{alg:ADMM} for varying 
lnLasso parameter $\lambda$ and fixed labeling rate $p\!=\!4/10$. Fig. \ref{fig:convergence_chain8} illustrates that, for a larger value of $\lambda$, e.g. $\lambda = 10^{-1}$ or $10^{-2}$, Alg. \ref{alg:ADMM} tends to deliver a classifier with better accuracy than that of smaller $\lambda$, e.g. $\lambda = 10^{-4}$ or $10^{-5}$. This proves that taking into account the network structure is beneficial to classify a networked data.
Moreover, it is shown in Fig. \ref{fig:convergence_chain8} that the accuracies do not improve after few iterations. This implies that lnLasso can yields a reasonable accuracy after a few number of iterations.


\subsection{Grid}
In the second experiment, we consider a grid graph with $N= 20 \!\times\! 20 \!=\! 400$ nodes. The graph is partitioned into $4$ clusters which are 
grid graphs of size $10 \times 10$. Similar to the chain, the edge weights $\gweight_{ij} =100$ if nodes $i$ and $j$ belong to the same cluster and $\gweight_{ij} =1$ otherwise. The average accuracies are plotted in Fig. \ref{fig:accuracy_grid4} which also shows that the accuracy increases with the labeling rate $p$. We also plot the accuracy (cf. \eqref{equ_exp_accu}) over iterations of Alg. \ref{alg:ADMM} for different values of $\lambda$ with $p=0.4$ in Fig. \ref{fig:convergence_grid4}. 
\begin{figure}[htbp]
	\includegraphics[width=1\columnwidth]{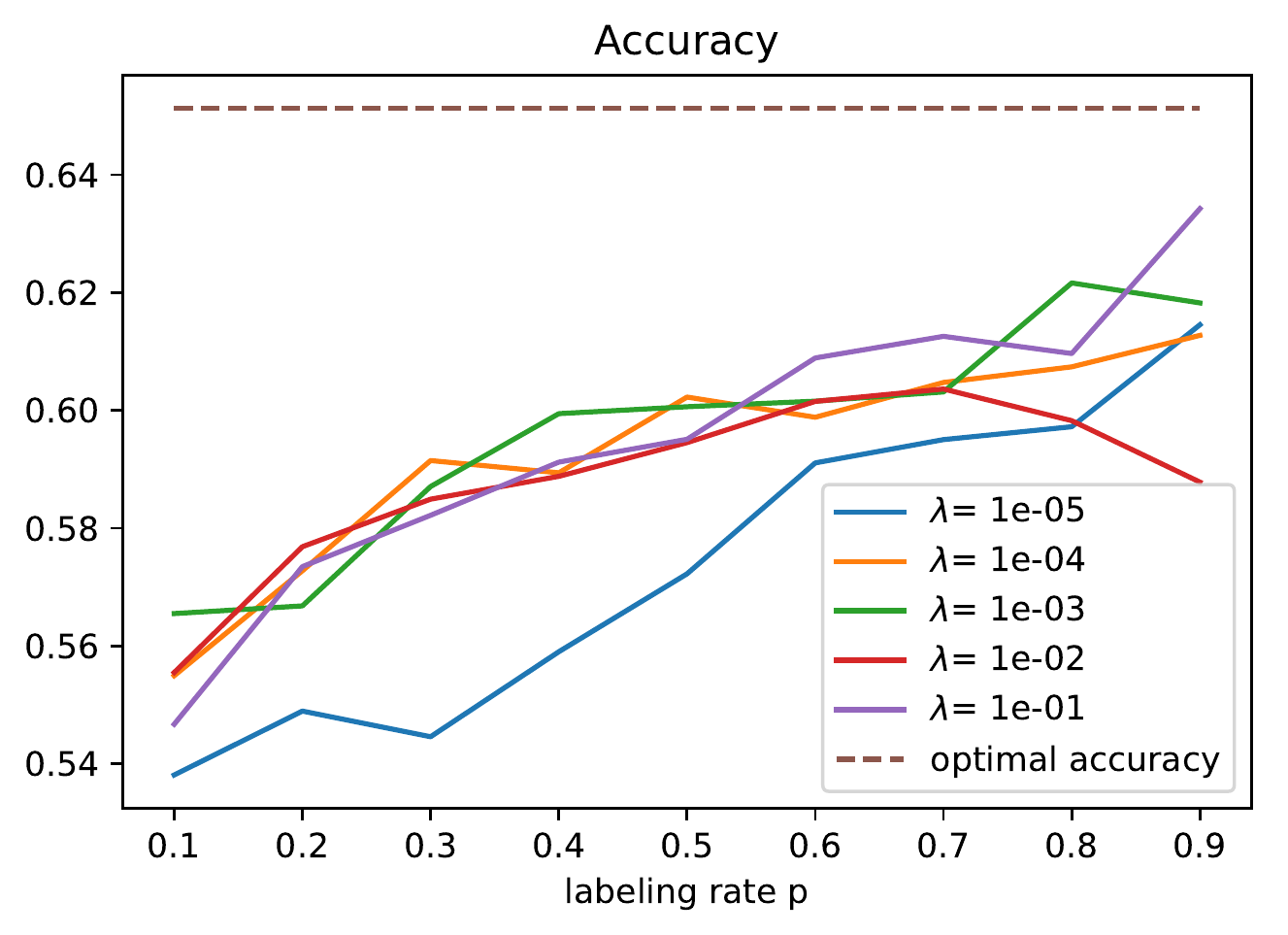}
	\vspace*{-3mm}
	\caption{The classification accuracy for grid-structured data.}
	\label{fig:accuracy_grid4}
	\vspace*{-3mm}
\end{figure}	

\begin{figure}[htbp]
	\includegraphics[width=1\columnwidth]{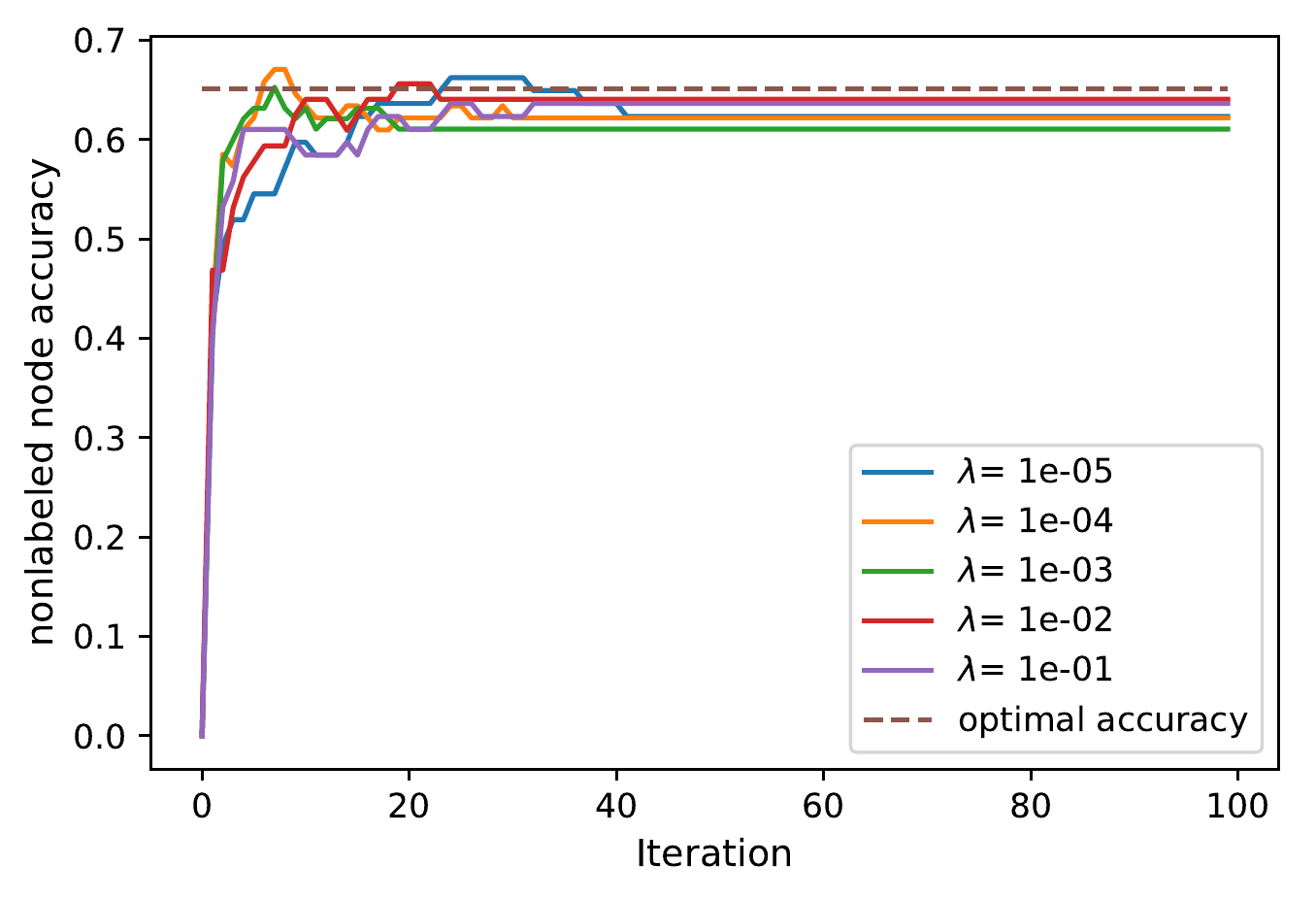}
	\vspace*{-3mm}
	\caption{The convergence rate of Alg. \ref{alg:ADMM} for grid structured data and labeling rate $p=0.4$.}
	\label{fig:convergence_grid4}
	\vspace*{-3mm}
\end{figure}

%

\bibliographystyle{IEEEbib}
\bibliography{SLPBib,tf-zentral}

\end{document}